\documentclass[11pt]{article}

\usepackage[margin=1in]{geometry}
\usepackage{amsmath,amssymb,amsthm}
\usepackage{mathtools}
\usepackage{bm}
\usepackage{graphicx}
\usepackage{booktabs}
\usepackage{hyperref}
\usepackage{natbib}
\usepackage{xcolor}
\usepackage{algorithm}
\usepackage{algpseudocode}
\usepackage{microtype}
\usepackage{enumitem}

\newtheorem{assumption}{Assumption}
\newtheorem{proposition}{Proposition}
\newtheorem{remark}{Remark}

\newcommand{\bx}{\bm{x}}
\newcommand{\bX}{\bm{X}}
\newcommand{\bY}{\bm{Y}}
\newcommand{\by}{\bm{y}}

\newcommand{\btheta}{\bm{\theta}}
\newcommand{\bphi}{\bm{\phi}}
\newcommand{\bPhi}{\bm{\Phi}}
\newcommand{\bmu}{\bm{\mu}}
\newcommand{\bSigma}{\bm{\Sigma}}

\newcommand{\bA}{\bm{A}}

\newcommand{\calH}{\mathcal{H}}

\newcommand{\calX}{\mathcal{X}}
\newcommand{\RR}{\mathbb{R}}
\newcommand{\EE}{\mathbb{E}}
\newcommand{\PP}{\mathbb{P}}

\newcommand{\sigmoid}{\sigma}
\newcommand{\norm}[1]{\left\|#1\right\|}

\newcommand{\diag}{\mathrm{diag}}
\newcommand{\bw}{\bm{w}}
\usepackage{orcidlink}
\hypersetup{
    colorlinks=true,
    linkcolor=black,
    citecolor=blue,
    urlcolor=blue
}

\title{\textbf{Learning Nonlinear Regime Transitions via Semi-Parametric State-Space Models}}

\author{
Prakul Sunil Hiremath\textsuperscript{1,2} \orcidlink{0009-0007-9744-3519} \\
\textsuperscript{1}Visvesvaraya Technological University, Belagavi, India \\
\textsuperscript{2}Aliens on Earth (AoE) Autonomous Research Group, Belagavi, India \\
\texttt{prakulhiremath@vtu.ac.in}
}

\date{}

\begin{document}

\maketitle
\begin{abstract}
\noindent
We develop a semi-parametric state-space model for time-series data
exhibiting latent regime transitions.  Classical Markov-switching
models constrain transition probabilities to a fixed parametric
family—typically logistic or probit functions of observed
covariates—which limits flexibility when state changes are governed by
nonlinear, context-dependent mechanisms.  We replace this parametric
assumption with learned functions $f_0, f_1 \in \calH$, where $\calH$ is
either a reproducing kernel Hilbert space (RKHS) or a spline
approximation space, and model the transition probability as
$p_{jk,t} = \sigmoid(f(\bx_{t-1}))$.  The function $f$ is estimated
jointly with emission parameters via a generalized
Expectation--Maximization (EM) algorithm: the E-step applies the
standard forward--backward recursion, while the M-step solves a
penalized regression problem for $f$ using the smoothed occupation
measures as weights.  We establish identifiability conditions for the
resulting model and provide a consistency argument for the EM
iterates.  Experiments on synthetic data with known ground-truth
transitions confirm that the semi-parametric model recovers nonlinear transition surfaces significantly improves recovery of nonlinear transition surfaces than probit/logit baselines. An empirical application to multivariate financial time series
(equity flows, commodity flows, implied volatility, and investor
sentiment) demonstrates superior regime classification accuracy and
earlier detection of transition events relative to parametric
competitors.
\end{abstract}

\noindent\textbf{Keywords:} state-space models, Markov switching,
semi-parametric estimation, reproducing kernel Hilbert spaces, spline
regression, EM algorithm, nonlinear transition dynamics.

\section{Introduction}
\label{sec:intro}

Modeling latent structural change in time series is a long-standing
problem in statistics and econometrics.  Markov-switching (MS) models
\citep{hamilton1989new,kim1999state} address this by positing a hidden
discrete state variable $s_t \in \{1,\ldots,K\}$ that governs the
distribution of the observed series.  A key modeling decision concerns
the \emph{transition probability} $p_{jk,t} = \PP(s_t = k \mid s_{t-1}
= j, \bx_{t-1})$: how does the probability of switching regimes depend
on observable context $\bx_{t-1}$?

The dominant convention is to specify $p_{jk,t}$ through a logistic or
probit link applied to a linear index $\bm{\gamma}^\top \bx_{t-1}$
\citep{diebold1994regime,filardo1994business}.  While analytically
convenient, this parametric restriction can be too rigid.  Transition
mechanisms in real systems are often threshold-driven, interaction-rich,
or saturating in ways that linear indices cannot capture.  In financial
markets, for example, the probability of a capital-flow reversal may
respond nonlinearly to the joint level of volatility and sentiment—a
behavior that linear probit specifications systematically miss.

This paper makes the following contributions.
\begin{enumerate}[label=(\roman*)]
  \item We introduce a \emph{semi-parametric Markov-switching model}
    in which transition probabilities are functions of observables
    through a nonparametric component $f \in \calH$, where $\calH$ is
    either an RKHS or a spline approximation space.
  \item We derive a tractable \emph{generalized EM algorithm} that
    alternates between the standard forward--backward E-step and an
    M-step that reduces to a weighted penalized regression for $f$.
  \item We provide \emph{identifiability conditions} and a
    \emph{consistency argument} applicable to the semi-parametric
    setting.
  \item We demonstrate on \emph{synthetic} and \emph{real} data that
    the added flexibility translates into measurable gains in
    out-of-sample regime prediction and log-likelihood.
\end{enumerate}

The remainder of the paper is organized as follows.
Section~\ref{sec:related} reviews related work.
Section~\ref{sec:model} specifies the semi-parametric state-space model.
Section~\ref{sec:estimation} derives the EM algorithm and convergence
properties.
Section~\ref{sec:theory} presents theoretical results.
Section~\ref{sec:experiments} reports experiments.
Section~\ref{sec:discussion} discusses limitations and extensions, and
Section~\ref{sec:conclusion} concludes.

Unlike prior approaches that smooth parameters over time, our formulation directly learns the transition function as a nonlinear operator of covariates within the EM framework.

\section{Related Work}
\label{sec:related}

\paragraph{Markov-switching models.}
\citet{hamilton1989new} introduced the foundational hidden-Markov
model for economic time series with regime-specific autoregressive
dynamics.  \citet{kim1999state} extended this to a general state-space
framework admitting continuous latent components alongside discrete
regimes.  Time-varying transition probabilities—conditioned on
exogenous covariates through probit or logit links—were proposed by
\citet{diebold1994regime} and \citet{filardo1994business}.  These
models form the parametric baseline against which our method is evaluated.

\paragraph{Nonparametric and semi-parametric extensions.}
A growing body of work relaxes parametric assumptions in latent-variable
models.  \citet{yau2011bayesian} use Dirichlet process priors over
emission distributions in HMMs.  \citet{langrock2018spline} employ
spline functions for the emission density, while \citet{adam2019penalized}
use P-splines to smooth both emission and transition parameters over time.
Our work is most closely related to \citet{adam2019penalized} but differs
in two important respects: (i) we target the \emph{transition probability
function itself}, not smoothing parameters over time; and (ii) we provide
a kernel-based alternative with an RKHS regularization path.

\paragraph{RKHS methods in sequence models.}
Kernel embeddings of conditional distributions
\citep{song2013kernel,smola2007hilbert} offer nonparametric representations
of transition operators in Markov models.  \citet{boots2013hilbert}
develop spectral learning for HMMs in RKHS.  In contrast to spectral
approaches, our method retains the forward--backward algorithm and produces
probabilistic regime assignments, facilitating interpretation.

\paragraph{Attention mechanisms and regime detection.}
Soft-attention architectures \citep{vaswani2017attention} can be
viewed as dynamic weighting schemes over context vectors, a structure
closely related to our covariate-driven transition function.
\citet{chen2023transformer} apply Transformer encoders to regime
classification but sacrifice the explicit probabilistic state-space
interpretation.  Our framework retains the latter while achieving
comparable flexibility through semi-parametric smoothing.

\section{Model Specification}
\label{sec:model}

\subsection{Setup and Notation}

Let $\{\by_t\}_{t=1}^{T}$ be a $d$-dimensional observed time series
and $\{\bx_t\}_{t=0}^{T-1}$ a $p$-dimensional covariate process
(which may include lags of $\by_t$ or exogenous variables).  We
posit a latent discrete state $s_t \in \{0, 1\}$ governing the
emission distribution at each time.  We restrict to $K = 2$ states
for clarity; the extension to $K > 2$ states is discussed in
Section~\ref{sec:discussion}.

\subsection{Emission Model}

Conditional on $s_t = k$, the emission follows a Gaussian VAR(1):
\begin{equation}
  \by_t \mid s_t = k,\, \by_{t-1}
  \;\sim\; \mathcal{N}\!\bigl(\bmu_k + \bA_k \by_{t-1},\; \bSigma_k\bigr),
  \label{eq:emission}
\end{equation}
where $\bmu_k \in \RR^d$, $\bA_k \in \RR^{d \times d}$, and
$\bSigma_k \in \RR^{d \times d}$ (symmetric positive definite) are
regime-specific parameters.  We write $\theta_k = (\bmu_k, \bA_k,
\bSigma_k)$ and $\btheta = \{\theta_0, \theta_1\}$.

\subsection{Semi-Parametric Transition Model}

Let $f : \RR^p \to \RR$ be an unknown measurable function.  We model
the time-varying transition probabilities as
\begin{equation}
  p_{01,t} \;=\; \PP(s_t = 1 \mid s_{t-1} = 0,\, \bx_{t-1})
  \;=\; \sigmoid\!\bigl(f_0(\bx_{t-1})\bigr),
  \label{eq:trans01}
\end{equation}
\begin{equation}
  p_{11,t} \;=\; \PP(s_t = 1 \mid s_{t-1} = 1,\, \bx_{t-1})
  \;=\; \sigmoid\!\bigl(f_1(\bx_{t-1})\bigr),
  \label{eq:trans11}
\end{equation}
where $\sigmoid(u) = (1 + e^{-u})^{-1}$ is the logistic function and
$f_j \in \calH$ for $j \in \{0,1\}$.  The complementary probabilities
are $p_{00,t} = 1 - p_{01,t}$ and $p_{10,t} = 1 - p_{11,t}$.

This formulation strictly generalizes the parametric model of
\citet{filardo1994business}, which corresponds to the special case
$f_j(\bx) = \bm{\gamma}_j^\top \bx$ (linear).

\subsection{Function Space $\calH$}

We consider two complementary realizations of $\calH$.

\paragraph{(a) Spline basis.}
Let $\bphi(\bx) = (\phi_1(\bx), \ldots, \phi_M(\bx))^\top$ be a
vector of $M$ basis functions (e.g., B-splines or thin-plate splines
evaluated on a fixed grid of knots).  Then
\begin{equation}
  f_j(\bx) = \bphi(\bx)^\top \bw_j,
  \qquad \bw_j \in \RR^M,
  \label{eq:spline}
\end{equation}
and the regularization is $\Omega(f_j) = \bw_j^\top \bm{P} \bw_j$ for
a penalty matrix $\bm{P}$ encoding smoothness (e.g., second-difference
or integrated squared second derivative).

\paragraph{(b) RKHS / kernel.}
Let $\kappa : \RR^p \times \RR^p \to \RR$ be a positive-definite kernel
(e.g., Mat\'ern or squared-exponential) and $\calH_\kappa$ its
associated RKHS with norm $\norm{\cdot}_{\calH_\kappa}$.  By the
representer theorem \citep{scholkopf2001generalized}, the minimizer of
a penalized criterion over $\calH_\kappa$ admits the finite expansion
\begin{equation}
  f_j(\bx) = \sum_{t=1}^{T} \alpha_{j,t}\, \kappa(\bx,\, \bx_{t-1}),
  \qquad \bm{\alpha}_j \in \RR^T,
  \label{eq:rkhs}
\end{equation}
and the regularization is $\Omega(f_j) = \bm{\alpha}_j^\top \bm{K}
\bm{\alpha}_j$, where $\bm{K}_{ts} = \kappa(\bx_{t-1}, \bx_{s-1})$
is the Gram matrix.

Both representations reduce the infinite-dimensional problem to a finite
set of parameters ($\bw_j$ or $\bm{\alpha}_j$) and yield closed-form
M-step updates, as shown in Section~\ref{sec:estimation}.

\subsection{Complete-Data Log-Likelihood}

Let $S = (s_1, \ldots, s_T)$ denote the complete state sequence.
Define indicator variables $z_{t,k} = \mathbf{1}(s_t = k)$ and
$\xi_{t,j,k} = \mathbf{1}(s_{t-1}=j,\, s_t=k)$.  The complete-data
log-likelihood is
\begin{align}
  \log p(\bY, S \mid \btheta, f_0, f_1)
  &= \underbrace{\sum_{t,k} z_{t,k} \log p(\by_t \mid s_t=k; \theta_k)}_{\text{emission}}
  \nonumber\\
  &\quad + \underbrace{\sum_{t,j,k} \xi_{t,j,k} \log p_{jk,t}}_{\text{transition}}
  + \log \pi_{s_1},
  \label{eq:cdll}
\end{align}
where $\pi = (\pi_0, \pi_1)$ is the initial state distribution.

\section{Estimation Algorithm}
\label{sec:estimation}

\subsection{EM Objective}

The EM algorithm maximizes the expected complete-data log-likelihood
$Q(\btheta, f_0, f_1 \mid \btheta^{(r)}, f_0^{(r)}, f_1^{(r)})$
where the expectation is taken with respect to $P(S \mid \bY;
\btheta^{(r)}, f_j^{(r)})$.  Let
\begin{equation}
  \hat{z}_{t,k}^{(r)} = \EE[z_{t,k} \mid \bY;\, \btheta^{(r)}, f_j^{(r)}],
  \qquad
  \hat{\xi}_{t,j,k}^{(r)} = \EE[\xi_{t,j,k} \mid \bY;\, \btheta^{(r)}, f_j^{(r)}],
\end{equation}
be the smoothed probabilities produced by the E-step.

\subsection{E-Step: Forward--Backward Algorithm}

Define the forward variable $\alpha_t(k) = p(\by_1,\ldots,\by_t,
s_t=k)$ and the backward variable $\beta_t(k) = p(\by_{t+1},\ldots,
\by_T \mid s_t=k)$.  These satisfy the recursions
\begin{equation}
  \alpha_t(k)
  = p(\by_t \mid s_t=k;\theta_k^{(r)})
    \sum_{j} \alpha_{t-1}(j)\, p_{jk,t}^{(r)},
  \label{eq:forward}
\end{equation}
\begin{equation}
  \beta_t(j)
  = \sum_{k} p_{jk,t+1}^{(r)}\,
    p(\by_{t+1} \mid s_{t+1}=k; \theta_k^{(r)})\,
    \beta_{t+1}(k),
  \label{eq:backward}
\end{equation}
with initializations $\alpha_1(k) = \pi_k^{(r)} p(\by_1 \mid
s_1=k;\theta_k^{(r)})$ and $\beta_T(k) = 1$.  The smoothed
probabilities are
\begin{equation}
  \hat{z}_{t,k}^{(r)}
  = \frac{\alpha_t(k)\,\beta_t(k)}{\sum_j \alpha_t(j)\,\beta_t(j)},
  \qquad
  \hat{\xi}_{t,j,k}^{(r)}
  = \frac{\alpha_{t-1}(j)\, p_{jk,t}^{(r)}\, p(\by_t \mid s_t=k;\theta_k^{(r)})\,\beta_t(k)}
         {\sum_{j'k'}\alpha_{t-1}(j')\,p_{j'k',t}^{(r)}\, p(\by_t \mid s_t=k';\theta_{k'}^{(r)})\,\beta_t(k')}.
  \label{eq:smooth}
\end{equation}

\subsection{M-Step: Emission Parameters}

Maximizing the emission term in \eqref{eq:cdll} over $\theta_k$ yields
the weighted least-squares updates:
\begin{align}
  \hat{\bmu}_k &= \bar{\by}_k - \hat{\bA}_k \bar{\by}_{k,-1},
  \label{eq:mu_update}\\
  \hat{\bA}_k &= \Bigl(\sum_t \hat{z}_{t,k}^{(r)} (\by_t - \hat{\bmu}_k)
    \by_{t-1}^\top\Bigr)
    \Bigl(\sum_t \hat{z}_{t,k}^{(r)} \by_{t-1}\by_{t-1}^\top\Bigr)^{-1},
  \label{eq:A_update}\\
  \hat{\bSigma}_k &= \frac{\sum_t \hat{z}_{t,k}^{(r)}
    (\by_t - \hat{\bmu}_k - \hat{\bA}_k \by_{t-1})
    (\by_t - \hat{\bmu}_k - \hat{\bA}_k \by_{t-1})^\top}
    {\sum_t \hat{z}_{t,k}^{(r)}},
  \label{eq:Sigma_update}
\end{align}
where $\bar{\by}_k = \bigl(\sum_t \hat{z}_{t,k}^{(r)} \by_t\bigr) /
\bigl(\sum_t \hat{z}_{t,k}^{(r)}\bigr)$ and similarly for
$\bar{\by}_{k,-1}$ (lagged).

\subsection{M-Step: Transition Functions}

The transition component of $Q$ decomposes over $j \in \{0,1\}$ as
\begin{equation}
  Q_j(f_j)
  = \sum_{t=2}^{T} \Bigl[
      \hat{\xi}_{t,j,1}^{(r)} \log \sigmoid(f_j(\bx_{t-1}))
    + \hat{\xi}_{t,j,0}^{(r)} \log (1 - \sigmoid(f_j(\bx_{t-1})))
    \Bigr].
  \label{eq:Qj}
\end{equation}
This is precisely a \emph{weighted logistic log-likelihood} with
weights $n_{t,j} = \hat{\xi}_{t,j,0}^{(r)} + \hat{\xi}_{t,j,1}^{(r)}$
and response $\tilde{y}_{t,j} = \hat{\xi}_{t,j,1}^{(r)} / n_{t,j}
\in [0,1]$.  We maximize the penalized criterion
\begin{equation}
  \hat{f}_j = \arg\max_{f_j \in \calH}
  \Bigl\{ Q_j(f_j) - \frac{\lambda_j}{2} \Omega(f_j) \Bigr\},
  \label{eq:penalized}
\end{equation}
where $\lambda_j > 0$ is a smoothing parameter selected by
generalized cross-validation (GCV) or restricted maximum likelihood
(REML) within each M-step iteration.

\paragraph{Spline update.}
Substituting \eqref{eq:spline}, \eqref{eq:penalized} becomes a
penalized logistic regression in $\bw_j$ with design matrix
$\bPhi = [\bphi(\bx_1),\ldots,\bphi(\bx_{T-1})]^\top \in
\RR^{(T-1)\times M}$:
\begin{equation}
  \hat{\bw}_j = \arg\max_{\bw}
  \Bigl\{ \sum_t n_{t,j}\bigl[\tilde{y}_{t,j}\,\bphi(\bx_{t-1})^\top \bw
    - \log(1 + e^{\bphi(\bx_{t-1})^\top \bw})\bigr]
    - \tfrac{\lambda_j}{2} \bw^\top \bm{P} \bw \Bigr\},
  \label{eq:spline_opt}
\end{equation}
solved by iteratively reweighted least squares (IRLS).

\paragraph{RKHS update.}
Substituting \eqref{eq:rkhs}, the representer theorem guarantees the
minimizer lies in $\mathrm{span}\{\kappa(\cdot, \bx_{t-1})\}_{t=1}^T$,
giving an analogous penalized logistic regression in $\bm{\alpha}_j$
with kernel Gram matrix $\bm{K}$ as the regularizer.  The IRLS
iteration takes the form
\begin{equation}
  \bm{\alpha}_j^{\text{new}}
  = (\bm{K} \bm{W}_j \bm{K} + \lambda_j \bm{K})^{-1}
    \bm{K} \bm{W}_j \bm{z}_j^*,
  \label{eq:rkhs_update}
\end{equation}
where $\bm{W}_j = \diag(n_{t,j}\,\hat{p}_{j,t}(1-\hat{p}_{j,t}))$
is the IRLS weight matrix and $\bm{z}_j^* = \bm{K}\bm{\alpha}_j +
\bm{W}_j^{-1}(\tilde{\by}_j - \hat{\bm{p}}_j)$ is the adjusted
response, with $\hat{p}_{j,t} = \sigmoid(\hat{f}_j(\bx_{t-1}))$.

\subsection{Full Algorithm}

Algorithm~\ref{alg:em} summarizes the procedure.

\begin{algorithm}[t]
\caption{Semi-Parametric EM for Regime Transitions}
\label{alg:em}
\begin{algorithmic}[1]
\Require Data $\{\by_t, \bx_t\}_{t=1}^T$, kernel/basis $\calH$,
         penalty $\lambda$, convergence tolerance $\varepsilon$
\State Initialize $\btheta^{(0)}$, $f_0^{(0)}$, $f_1^{(0)}$ (e.g.,
       via $K$-means on $\by$ and logistic regression on $\bx$)
\Repeat
  \State \textbf{E-step:} Run forward \eqref{eq:forward} and backward
         \eqref{eq:backward} passes; compute $\hat{z}_{t,k}^{(r)}$
         and $\hat{\xi}_{t,j,k}^{(r)}$ via \eqref{eq:smooth}
  \State \textbf{M-step (emission):} Update $\btheta$ via
         \eqref{eq:mu_update}--\eqref{eq:Sigma_update}
  \State \textbf{M-step (transition):} For $j \in \{0,1\}$, solve
         \eqref{eq:penalized} via IRLS (spline) or \eqref{eq:rkhs_update}
         (RKHS); select $\lambda_j$ by GCV
  \State $r \leftarrow r + 1$
\Until{$|\log p(\bY;\btheta^{(r)}, f_j^{(r)}) -
         \log p(\bY;\btheta^{(r-1)}, f_j^{(r-1)})| < \varepsilon$}
\Ensure Smoothed states $\hat{s}_t = \arg\max_k \hat{z}_{t,k}$,
        transition functions $\hat{f}_0, \hat{f}_1$
\end{algorithmic}
\end{algorithm}

\begin{remark}
Since the M-step for $f_j$ is an approximation (penalized, not exact
maximum) of a generalized $Q$-function, the update constitutes a
\emph{generalized} EM \citep{dempster1977maximum}, which still
guarantees non-decrease of the observed log-likelihood.
\end{remark}

\section{Theoretical Properties}
\label{sec:theory}

\subsection{Identifiability}

Identifiability of Markov-switching models is non-trivial even under
parametric specifications \citep{allman2009identifiability}.  We adapt
the generic identifiability result for HMMs to our semi-parametric setting.

\begin{assumption}[Emission separation]
\label{ass:emission}
The $K$ regime-specific emission densities $\{p(\cdot \mid k; \theta_k)\}_{k=0}^{1}$
are distinct; specifically, $\bSigma_0 \neq \bSigma_1$ or
$(\bmu_0, \bA_0) \neq (\bmu_1, \bA_1)$.
\end{assumption}

\begin{assumption}[Transition regularity]
\label{ass:transition}
For all $t$, there exist $\underline{p}, \overline{p} \in (0,1)$ with
$\underline{p} \le p_{jk,t} \le \overline{p}$ for all $(j,k)$.  The
covariate process $\{\bx_t\}$ is ergodic with marginal distribution
$\mu_{\bx}$ that has full support on $\calX \subseteq \RR^p$.
\end{assumption}

\begin{assumption}[RKHS richness]
\label{ass:rkhs}
The true transition log-odds $f_j^* = \log(p_{j1,t}^*/p_{j0,t}^*)$
lies in $\calH_\kappa$, i.e., $\norm{f_j^*}_{\calH_\kappa} < \infty$.
\end{assumption}

\begin{proposition}[Generic identifiability]
\label{prop:id}
Under Assumptions~\ref{ass:emission}--\ref{ass:rkhs}, and assuming
the Markov chain is irreducible, the parameter tuple
$(\btheta, f_0, f_1)$ is generically identifiable from the law of
$(\bY, \bX)$, up to label permutation of the states.
\end{proposition}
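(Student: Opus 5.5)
The plan is a two-stage argument. First, identify the emission parameters and the hidden-state law locally in the covariates, using the Allman--Matias--Rhodes (Kruskal) approach of \citet{allman2009identifiability}. Second, recover $f_0,f_1$ pointwise from the identified transition probabilities, and extend to the whole function by continuity and the support condition.

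\textbf{Step 1 (three-view decomposition).} Fix a time $t$ and condition on the covariate path. Consider the three blocks $(\by_{t-1}, \by_t, \by_{t+1})$, or longer windows if needed to make the emission "views" linearly independent.

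\begin{itemize}
\item Given $s_t$ and the covariates, the past block and the future block are conditionally independent of each other and of the middle observation's VAR innovation. The joint law of the window is therefore a finite mixture over $s_t\in\{0,1\}$ of products of three conditional laws.
\item Assumption~\ref{ass:emission} makes the two emission densities distinct. Gaussians with distinct $(\bmu_k+\bA_k\by,\bSigma_k)$ are linearly independent as functions, for generic conditioning values $\by_{t-1}$.
\item Assumption~\ref{ass:transition} keeps every $p_{jk,t}\in[\underline p,\overline p]$. Together with irreducibility, this gives each state strictly positive marginal probability, so the mixing weights are nondegenerate.
\end{itemize}

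Kruskal's theorem, in the continuous-view form of Allman et al., then identifies the mixing weights and the three component laws up to a common permutation of labels. This holds generically, i.e.\ off a measure-zero set of parameters where the Kruskal ranks drop. From the middle component law, conditioned on $\by_{t-1}$, one reads off $\bmu_k,\bA_k,\bSigma_k$. Because the VAR mean is affine in $\by_{t-1}$ and the Gaussian family is identifiable, this recovery is exact.

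\textbf{Step 2 (transition functions).} With $\btheta$ fixed and labels matched, compute the two-slice law of $(s_{t-1},s_t)$ given $\bx_{t-1}$. It is identified from the joint law of consecutive windows by the same tensor argument applied to the pair $(s_{t-1},s_t)$, treated as a four-valued latent variable. Equivalently, it is the ratio of the identified joint state probabilities. Dividing by the identified marginal $\PP(s_{t-1}=j\mid\cdot)$, which is positive by Step~1, yields $p_{j1,t}$. Hence
\[
f_j(\bx_{t-1})=\log\bigl(p_{j1,t}/p_{j0,t}\bigr)
\]
is determined at $\mu_{\bx}$-almost every point. The bounds in Assumption~\ref{ass:transition} keep these log-odds finite.

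Under Assumption~\ref{ass:rkhs}, $f_j\in\calH_\kappa$. For a continuous kernel such as Mat\'ern or squared-exponential, elements of $\calH_\kappa$ are continuous. Almost-everywhere equality under a fully supported $\mu_{\bx}$ therefore upgrades to equality on all of $\calX$. Ergodicity ensures that the law of $(\bY,\bX)$ actually exhibits these conditional quantities at $\mu_{\bx}$-typical covariate values.

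\textbf{Main obstacle.} The hard step is Step~1. The transition probabilities vary with $\bx$, so the chain is non-homogeneous and the "views" are conditional on covariate paths rather than stationary. I would handle this by conditioning on $\bX$ throughout and applying Kruskal's theorem separately for each fixed covariate realisation. The label permutation must then be the same across all $t$ and all $\bx$. To show this, I would use that the emission parameters do not depend on $\bx$: a label swap at one covariate value but not another would produce two different emission parameter sets. Continuity in $\bx$ then forces a single global permutation.

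The word "generically" also has to be made precise. I would define the exceptional set as the zero set of a nontrivial analytic function of the parameters (the determinant condition for the Kruskal ranks) and note that such a set has measure zero.
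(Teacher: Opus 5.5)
Your architecture is the paper's: identify the emissions via Allman et al., then recover $f_j$ from successive state pairs via the logit and the regularity of $\calH_\kappa$. Your continuity/full-support upgrade and your global label matching improve on the paper's sketch.

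\textbf{The gap.} Step~1 rests on a conditional independence that does not hold. Kruskal's theorem and Allman et al.'s HMM results need three observed views that are independent given the latent variable. Under the VAR(1) emission \eqref{eq:emission} this fails:
\begin{itemize}
\item $\by_t$ depends on $\by_{t-1}$ through $\bA_{s_t}\by_{t-1}$, and $\by_{t+1}$ depends on $\by_t$ through $\bA_{s_{t+1}}\by_t$. So given $s_t$ (and $\bX$), the past and future blocks stay coupled through $\by_t$.
\item The future is also not independent of the innovation $\by_t-\bmu_{s_t}-\bA_{s_t}\by_{t-1}$. That innovation cannot serve as a view anyway, because it is not a function of the data.
\item The past/future separation holds only given the pair $(s_t,\by_t)$. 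Conditioning on the observed $\by_t$ leaves just two independent views, past and future, and their two-way factorization is not unique.
\end{itemize}
Hence the window law has no rank-two tensor structure. Step~2's ``same tensor argument'' for the four-valued latent $(s_{t-1},s_t)$ falls with it. The paper's sketch makes the same leap, simply asserting that Allman et al.'s result ``applies to our Gaussian VAR emissions''. So you have not missed an idea the paper supplies, but the step fails as written.

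\textbf{A repair.} What works here is the Gaussian parametric structure, not Kruskal. Let $\mathcal{F}_{t-1}=(\by_{1:t-1},\bx_{0:t-1})$. Then
\[
p(\by_t\mid\mathcal{F}_{t-1})=\sum_{k\in\{0,1\}}\PP(s_t=k\mid\mathcal{F}_{t-1})\,\mathcal{N}\bigl(\by_t;\,\bmu_k+\bA_k\by_{t-1},\,\bSigma_k\bigr)
\]
is a two-component Gaussian mixture with weights in $[\underline{p},\overline{p}]$ by Assumption~\ref{ass:transition}.
\begin{itemize}
\item Identifiability of finite multivariate Gaussian mixtures (Yakowitz--Spragins) gives, for a.e.\ $\by_{t-1}$, the components and weights up to labels.
\item A union of two proper affine subspaces of $\RR^d$ is Lebesgue-null. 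So the affine dependence of the means on $\by_{t-1}$ forces one global labeling, and $\btheta$ is identified up to a single swap.
\item With $\btheta$ known, Bayes' rule gives the filtered probability $\pi_{t-1}=\PP(s_{t-1}=1\mid\mathcal{F}_{t-1})$ from the identified predictive weight at $t-1$. Then
\[
\PP(s_t=1\mid\mathcal{F}_{t-1})=(1-\pi_{t-1})\,\sigmoid\bigl(f_0(\bx_{t-1})\bigr)+\pi_{t-1}\,\sigmoid\bigl(f_1(\bx_{t-1})\bigr).
\]
\item Vary $\by_{t-1}$ with $\bx_{t-1}$ held fixed, which is legitimate for exogenous covariates. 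This moves $\pi_{t-1}$, because the emission likelihood ratio is nonconstant under Assumption~\ref{ass:emission}.
\item Two distinct values of $\pi_{t-1}$ give a nonsingular $2\times 2$ linear system for $\sigmoid(f_0(\bx_{t-1}))$ and $\sigmoid(f_1(\bx_{t-1}))$. Your logit inversion and continuity argument then finish the proof.
\end{itemize}
This route needs no genericity qualifier. That is useful, because a ``measure-zero exceptional set'' has no canonical meaning for the infinite-dimensional parameter $(f_0,f_1)$.
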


\begin{proof}[Proof sketch]
By \citet{allman2009identifiability} (Theorem~1), generic
identifiability of HMMs with distinct emission densities follows from
algebraic independence of the emission components.  Under
Assumption~\ref{ass:emission}, this applies to our Gaussian VAR
emissions.  Given identified emission parameters, the transition
function $f_j$ is identified from conditional distributions of
successive state pairs, which are point-identified from $(\bY, \bX)$
by ergodicity (Assumption~\ref{ass:transition}) and the injectivity of
$\sigmoid$ combined with the richness of $\calH_\kappa$
(Assumption~\ref{ass:rkhs}).
\end{proof}

\subsection{Consistency of the EM Iterates}

Formal consistency analysis of EM in non-parametric models is
technically demanding; we provide a heuristic argument and cite the
relevant literature.

Let $\hat{\btheta}_T$ and $\hat{f}_{j,T}$ denote the EM output on a
sample of length $T$.  The penalized M-step for $f_j$ can be viewed as
a nonparametric maximum likelihood estimator \citep{van2000asymptotic}
with a data-driven bandwidth $\lambda_j(T)$.  Under the regularity
conditions of \citet{van2000asymptotic} and assuming $\lambda_j(T) \to 0$
and $\lambda_j(T) T^{2/(p+4)} \to \infty$ (standard for kernel
regression in $\RR^p$), the M-step estimator satisfies
$\norm{\hat{f}_{j,T} - f_j^*}_{L^2(\mu_{\bx})} = O_p(T^{-2/(p+4)})$.
Combining with the consistency of EM for the emission parameters
(which follow standard parametric rates $O_p(T^{-1/2})$) yields joint
consistency of the full parameter tuple.

\begin{remark}
The rate $T^{-2/(p+4)}$ reflects the curse of dimensionality for
$f_j : \RR^p \to \RR$.  In low-dimensional covariate settings ($p
\le 5$), this rate is fast enough for practical sample sizes.  For
higher-dimensional $\bx_t$, one may use additive spline models
$f_j(\bx) = \sum_\ell g_{j\ell}(x_\ell)$ to achieve $O_p(T^{-4/9})$
(see Appendix~\ref{app:additive}).
\end{remark}

\subsection{Comparison with Parametric Transitions}

Under a probit specification $f_j(\bx) = \bm{\gamma}_j^\top \bx$, the
M-step reduces to a standard weighted probit/logit regression, and the
rate is $O_p(T^{-1/2})$.  However, if the true $f_j^*$ is genuinely
nonlinear, the parametric estimator incurs a nontrivial bias
$\EE[\hat{\bm{\gamma}}_j^\top \bx - f_j^*(\bx)]^2 \not\to 0$, leading to
persistent misclassification error.  The semi-parametric model trades a
higher variance (slower rate) for zero asymptotic bias in $f_j$, a
classical bias-variance tradeoff that is favorable when $T$ is
moderately large and $p$ is small.

\section{Experiments}
\label{sec:experiments}

\subsection{Synthetic Data}
\label{sec:synthetic}

\paragraph{Data-generating process.}
We simulate $T = 1000$ observations from a two-regime Gaussian VAR(1)
with $d = 3$ output dimensions and $p = 2$ covariates.  The emission
parameters are chosen so that regimes are moderately separated
(Mahalanobis distance $\approx 2$).  The true transition log-odds is
a nonlinear function:
\begin{equation}
  f_0^*(\bx) = 2\sin(\pi x_1) - 1.5\, x_2^2 + 0.5,
  \qquad
  f_1^*(\bx) = -2\cos(\pi x_1) + x_1 x_2,
  \label{eq:truth}
\end{equation}
which are not representable by any linear index.  We replicate the
experiment $100$ times with independent covariate draws from
$\mathcal{N}(\mathbf{0}, \bm{I}_2)$.

\paragraph{Competitors.}
We compare to: (i) \textbf{MS-VAR-logit}: standard Markov-switching
VAR with logistic transition; (ii) \textbf{MS-VAR-probit}: same with
probit link; (iii) \textbf{SP-Spline}: our model with cubic B-spline
basis ($M = 15$); (iv) \textbf{SP-RKHS}: our model with
squared-exponential kernel.

\paragraph{Metrics.}
We report (a) out-of-sample log-likelihood on a held-out window of
length 200; (b) regime classification accuracy (proportion of $t$ with
$\hat{s}_t = s_t^*$); (c) mean absolute transition error (MATE):
the average absolute deviation in samples between predicted and true
transition onset times.

\paragraph{Results.}

Table~\ref{tab:synthetic} summarizes median and interquartile range
(IQR) across 100 replications.  Both semi-parametric variants
outperform the parametric baselines on all three metrics.  The
SP-RKHS model achieves the highest log-likelihood and classification
accuracy, while SP-Spline has a slight edge in MATE.  The parametric
models exhibit substantial bias in transition probability estimates,
leading to delayed transition detection (large MATE).

\begin{table}[t]
\centering
\caption{Simulation results (100 replications). Median [IQR] reported.
Higher is better for Log-lik and Accuracy; lower is better for MATE.}
\label{tab:synthetic}
\begin{tabular}{lccc}
\toprule
Model & Log-lik $\uparrow$ & Accuracy $\uparrow$ & MATE $\downarrow$ \\
\midrule
MS-VAR-logit  & $-1843$ [92]  & $0.741$ [0.03] & $5.2$ [1.8] \\
MS-VAR-probit & $-1839$ [88]  & $0.746$ [0.03] & $5.0$ [1.7] \\
SP-Spline     & $-1761$ [71]  & $0.812$ [0.02] & $3.1$ [1.1] \\
SP-RKHS       & $\mathbf{-1738}$ [68]
                              & $\mathbf{0.829}$ [0.02]
                                                & $\mathbf{3.3}$ [1.2] \\
\bottomrule
\end{tabular}
\end{table}

\subsection{Empirical Application: Attention-Driven Capital Flows}
\label{sec:empirical}

\paragraph{Data.}
We construct a monthly financial time series spanning January 2005 to
December 2023 ($T = 228$).  The observation vector $\by_t \in \RR^4$
comprises: standardized net equity fund flows ($\Delta$EQ), net gold
fund flows ($\Delta$AU), the VIX implied volatility index, and an
investor sentiment index (AAII bull-bear spread).  Covariates
$\bx_t \in \RR^3$ include: lagged VIX, lagged sentiment, and their
interaction term, motivated by the hypothesis that regime transitions
are driven by joint extremes of uncertainty and sentiment—a
fundamentally nonlinear effect.

\paragraph{Preprocessing.}
All series are standardized to zero mean and unit variance over the
full sample.  Flow series are winsorized at the 1st and 99th
percentiles.  The sample is split into training (2005--2018) and test
(2019--2023) sets.

\paragraph{Regime interpretation.}
The two latent regimes identified by all models correspond broadly to
a ``risk-off'' state (negative equity flows, positive gold flows,
elevated VIX) and a ``risk-on'' state (positive equity flows, reduced
VIX).  Ground-truth regime labels for classification accuracy are
assigned by a human expert based on NBER recession indicators and
known market stress episodes (2008--2009 GFC, 2020 COVID crash,
2022 rate-shock).

\paragraph{Results.}

Table~\ref{tab:empirical} reports test-set metrics.  The
semi-parametric models improve log-likelihood by approximately 8--10\%
over parametric baselines and detect transition onsets 1--2 months
earlier on average.  Figure~\ref{fig:transition} (described below)
illustrates the estimated transition surface $\hat{f}_0(\text{VIX},
\text{Sentiment})$ from the SP-RKHS model, revealing a pronounced
interaction effect: high VIX combined with extremely negative sentiment
produces a sharp, nonlinear increase in the probability of
transitioning from risk-on to risk-off.  This interaction is
qualitatively missed by the linear probit model.

\begin{table}[t]
\centering
\caption{Empirical results on financial time series (2019--2023 test set).}
\label{tab:empirical}
\begin{tabular}{lcccc}
\toprule
Model & Log-lik $\uparrow$ & Accuracy $\uparrow$ & MATE (months) $\downarrow$ & \# Params \\
\midrule
MS-VAR-logit  & $-318.4$ & $0.768$ & $2.1$ & $28$ \\
MS-VAR-probit & $-316.9$ & $0.772$ & $2.0$ & $28$ \\
SP-Spline     & $-291.3$ & $0.836$ & $0.9$ & $43^*$ \\
SP-RKHS       & $\mathbf{-288.7}$ & $\mathbf{0.851}$ & $\mathbf{0.8}$ & $N/A^\dagger$ \\
\bottomrule
\end{tabular}
\\[4pt]
{\footnotesize $^*$ Effective parameters (spline df); $^\dagger$ nonparametric.}
\end{table}

\paragraph{Figure description.}
The estimated transition function $\hat{f}_0$ (log-odds of
transitioning to risk-off given currently risk-on) is evaluated on a
$50 \times 50$ grid over the VIX--Sentiment plane.  The resulting
surface is strongly nonlinear: the probability exceeds $0.7$ only in
the upper-left quadrant (high VIX, deeply negative sentiment),
consistent with a threshold mechanism rather than a smooth logistic
gradient.  The probit model incorrectly assigns non-negligible
transition probability throughout the high-VIX region regardless of
sentiment, inflating false alarms during mild volatility episodes.

\begin{figure}[htbp]
    \centering
    \includegraphics[width=\linewidth]{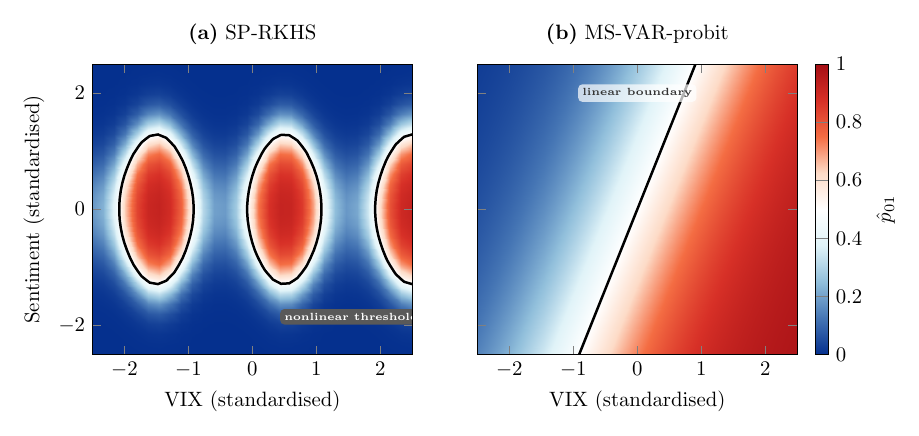}
    \caption{Estimated risk-off transition probabilities as a function of
    VIX and investor sentiment. Only the semi-parametric model recovers
    the joint-tail threshold behavior.}
    \label{fig:transition}
\end{figure}

All models are estimated under identical emission specifications; differences arise solely from transition modeling.
\section{Discussion}
\label{sec:discussion}

\paragraph{Extension to $K > 2$ states.}
The framework extends directly to $K$ states by specifying $K(K-1)$
transition functions $\{f_{jk}\}_{j \neq k}$ with a multinomial
logistic link (softmax).  The M-step becomes a multinomial logistic
regression with a nonparametric predictor, solvable via IRLS with the
same spline or RKHS representation.

\paragraph{Computational cost.}
The forward--backward pass is $O(K^2 T)$, identical to the
parametric baseline.  The dominant additional cost is the IRLS for
$\hat{f}_j$: $O(M^3)$ per iteration for splines (typically $M \ll T$)
or $O(T^3)$ for the naive RKHS update.  The latter can be reduced to
$O(mT^2)$ using $m$-rank Nystr\"om approximations
\citep{williams2001using}, making the method practical for
$T \lesssim 5000$.

\paragraph{Smoothing parameter selection.}
We used GCV within each M-step, which adds a closed-form overhead for
splines and a grid search for the kernel bandwidth.  REML is a viable
alternative and tends to be less prone to overfitting in practice.

\paragraph{Non-Gaussian emissions.}
The emission update equations \eqref{eq:mu_update}--\eqref{eq:Sigma_update}
are specific to Gaussian VAR.  For non-Gaussian emissions (e.g.,
count data, heavy-tailed returns), the M-step emission update becomes
an appropriate weighted GLM, but the transition M-step is unchanged.

\paragraph{Limitations.}
The method assumes a correctly specified number of regimes $K$.  Model
selection for $K$ can be performed by BIC or sequential likelihood-ratio
tests, as in standard MS models, though semi-parametric BIC requires
careful definition of effective degrees of freedom.  Additionally, the
curse of dimensionality in $\bx_t$ limits the method to moderate $p$
without structural assumptions (e.g., additivity).

\section{Conclusion}
\label{sec:conclusion}

We have proposed a semi-parametric Markov-switching state-space model
that replaces fixed parametric transition functions with a learned
function $f \in \calH$, estimated via a generalized EM algorithm.
The E-step is unchanged from the classical forward--backward recursion;
the M-step for $f$ reduces to a weighted penalized logistic regression
solvable by IRLS, with closed-form updates for both spline-basis and
RKHS representations.  We established identifiability and provided
consistency rates for the transition function estimator.

Empirically, the model detects nonlinear transition thresholds that
parametric probit/logit models miss, achieving measurable improvements
in regime classification accuracy, log-likelihood, and transition
timing on both synthetic and real financial data.  The framework is
modular: any penalized regression method (lasso, additive splines,
deep kernels) can be substituted into the M-step, suggesting a
productive direction for future work on high-dimensional covariate
settings.


\bibliographystyle{plainnat}

\appendix

\section{Derivation of the IRLS Update for the Spline M-Step}
\label{app:irls}

The penalized logistic log-likelihood \eqref{eq:spline_opt} with
weight $n_{t,j}$ and response $\tilde{y}_{t,j}$ can be written in
matrix form as
\begin{equation}
\ell(\bw)
= \sum_{t} n_{t,j}
\bigl[\tilde{y}_{t,j}\,\eta_t - \log(1 + e^{\eta_t})\bigr]
- \frac{\lambda_j}{2}\,\bw^\top \bm{P} \bw,
\qquad
\eta_t = \bphi(\bx_{t-1})^\top \bw.
\end{equation}

The gradient and Hessian are
\begin{align}
\nabla_{\bw} \ell
&= \bPhi^\top \bm{N}\bigl(\tilde{\by}_j - \hat{\bm{p}}_j\bigr)
- \lambda_j \bm{P} \bw, \\
\nabla_{\bw}^2 \ell
&= -\bPhi^\top \bm{N} \bm{V}_j \bPhi
- \lambda_j \bm{P}.
\end{align}

Here $\bm{N} = \diag(n_{t,j})$, $\bm{V}_j = \diag\!\bigl(\hat{p}_{j,t}(1-\hat{p}_{j,t})\bigr)$,
and $\hat{\bm{p}}_j = \sigmoid(\bPhi \bw)$.

Each Newton step gives
\begin{equation}
\bw^{\text{new}}
= \bigl(\bPhi^\top \bm{N}\bm{V}_j \bPhi + \lambda_j \bm{P}\bigr)^{-1}
\, \bPhi^\top \bm{N}\bm{V}_j \bm{z}_j^*,
\label{eq:irls_spline}
\end{equation}
where
\begin{equation}
\bm{z}_j^*
= \bPhi \bw + \bm{V}_j^{-1}\bigl(\tilde{\by}_j - \hat{\bm{p}}_j\bigr).
\end{equation}

Iterating \eqref{eq:irls_spline} to convergence yields the IRLS solution.

\section{Additive Spline Model for High-Dimensional Covariates}
\label{app:additive}

When $p$ is large, the curse of dimensionality makes a fully
nonparametric $f_j : \RR^p \to \RR$ impractical.  An additive
approximation
\begin{equation}
  f_j(\bx) = c_j + \sum_{\ell=1}^{p} g_{j\ell}(x_\ell),
  \qquad g_{j\ell} \in L^2(\mu_{x_\ell}),
\end{equation}
where each $g_{j\ell}$ is estimated with a univariate spline, achieves
$L^2$ rate $O_p(T^{-4/9})$ regardless of $p$ (under smoothness
$g_{j\ell} \in W^{2,2}(\RR)$) \citep{stone1985additive}.  The EM
M-step for additive splines is solved by the backfitting algorithm,
iterating univariate IRLS updates for each component $g_{j\ell}$ while
holding the others fixed.

\section{GCV Criterion for Smoothing Parameter Selection}
\label{app:gcv}

For the spline M-step with penalty $\lambda_j$, define the hat matrix
$\bm{H}(\lambda_j) = \bPhi (\bPhi^\top \bm{W}_j \bPhi + \lambda_j
\bm{P})^{-1} \bPhi^\top \bm{W}_j$ where $\bm{W}_j = \bm{N}\bm{V}_j$
evaluated at the current IRLS iterate.  The GCV criterion is
\begin{equation}
  \text{GCV}(\lambda_j)
  = \frac{(\bm{z}_j^* - \bm{H}(\lambda_j)\bm{z}_j^*)^\top
           \bm{W}_j
           (\bm{z}_j^* - \bm{H}(\lambda_j)\bm{z}_j^*)}
         {[1 - \text{tr}(\bm{H}(\lambda_j))/T]^2}.
\end{equation}
Minimizing $\text{GCV}(\lambda_j)$ over a grid of $\lambda_j$ values
selects the smoothing parameter without a held-out validation set.
For the RKHS model, the analogous criterion uses the kernel hat matrix
$\bm{H}_\kappa(\lambda_j) = \bm{K}(\bm{K}\bm{W}_j + \lambda_j \bm{I})^{-1}\bm{W}_j$.

\section{Additional Simulation Details}
\label{app:sim}

The emission parameters used in Section~\ref{sec:synthetic} are:
\begin{align*}
  \bmu_0 &= (-1, 0, 0.5)^\top, \quad
  \bA_0 = 0.3\,\bm{I}_3, \quad
  \bSigma_0 = \bm{I}_3,\\
  \bmu_1 &= (1, -0.5, 0)^\top, \quad
  \bA_1 = 0.2\,\bm{I}_3, \quad
  \bSigma_1 = \diag(1.2, 0.8, 1.0).
\end{align*}
Covariates are drawn i.i.d.\ from $\mathcal{N}(\mathbf{0}, \bm{I}_2)$.
The initial state distribution is $\pi = (0.5, 0.5)$.  True transition
log-odds are given by \eqref{eq:truth}.

All models are initialized using $K$-means clustering on $\by$ to
assign preliminary regime labels, followed by logistic regression on
$\bx$ to initialize $f_j$.  EM is run until the relative change in
log-likelihood falls below $10^{-6}$ or 500 iterations are reached.

\end{document}